\newcommand\fat[1]{\ThisStyle{\ooalign{%
  \kern.46pt$\SavedStyle#1$\cr\kern.33pt$\SavedStyle#1$\cr%
  \kern.2pt$\SavedStyle#1$\cr$\SavedStyle#1$}}}
\definecolor{amber}{rgb}{1.0, 0.75, 0.0}
\numberwithin{probs}{section} % important bit
\newtheorem{prob}{\textbf{Problem}}
\def\BibTeX{{\rm B\kern-.05em{\sc i\kern-.025em b}\kern-.08em
    T\kern-.1667em\lower.7ex\hbox{E}\kern-.125emX}}
\begin{document}
\mainmatter              % start of a contribution
\title{Learning Backbones: Sparsifying Graphs through Zero Forcing for Effective Graph-Based Learning}
\titlerunning{Learning Backbones}  % abbreviated title (for running head)
%                                     also used for the TOC unless
%                                     \toctitle is used
%
\author{Obaid Ullah Ahmad\inst{1}
% \orcidID{0000-0002-8303-1114} 
\and Anwar Said\inst{2} \and 
Mudassir Shabbir\inst{3} \and Xenofon Koutsoukos\inst{2} \and Waseem Abbas\inst{1}}
\authorrunning{O. Ahmad et al.} % abbreviated author list (for running head)

\institute{University of Texas at Dallas, Richardson, TX, USA\\
\email{ObaidUllah.Ahmad@utdallas.edu},\\ 
% WWW home page:
% \texttt{http://users.com/author1.html}
\and
Vanderbilt University,
Nashville, TN, USA\\
\and
Information Technology University,
Lahore, Pakistan\\}

\maketitle              % typeset the title of the contribution

\begin{abstract}
This paper introduces a novel framework for graph sparsification that preserves the essential learning attributes of original graphs, improving computational efficiency and reducing complexity in learning algorithms. We refer to these sparse graphs as ``learning backbones.'' Our approach leverages the zero-forcing (ZF) phenomenon, a dynamic process on graphs with applications in network control. The key idea is to generate a tree from the original graph that retains critical dynamical properties. By correlating these properties with learning attributes, we construct effective learning backbones. We evaluate the performance of our ZF-based backbones in graph classification tasks across eight datasets and six baseline models. The results demonstrate that our method outperforms existing techniques. Additionally, we explore extensions using node distance metrics to further enhance the framework's utility.

\keywords{Sparsification, Network Control Backbone, Graph Neural Networks, Graph Classification}
\end{abstract}

\section{Introduction}
\label{sec:intro}

In recent decades, networks have become essential for analyzing complex systems with applications in computer vision~\cite{rital2005weighted}, 3D object modeling~\cite{hamidi2019blind}, and chemical molecules~\cite{morris2020tudataset}. In machine learning, constructing efficient graph representations is critical for tasks like social network analysis, financial systems, and recommendation systems~\cite{wu2020comprehensive}. The complexity of real-world graphs often requires extracting sparse yet informative substructures, known as graph learning backbones, to enable effective learning~\cite{yassin2023modular}. This paper addresses the challenge of identifying these sparse representations while retaining essential properties for downstream tasks by integrating principles from Network Control Theory~\cite{gu2015controllability}.

Control theory, renowned for analyzing and steering dynamic systems~\cite{sontag2013mathematical}, helps select minimal edge sets that capture a graph's intrinsic behavior~\cite{posfai2013effect, barabasi2019twenty}. Viewing a graph as a dynamic system, this approach ensures controllability by maintaining key structural properties.

In graph-based learning, preserving dynamic properties is crucial for accurate classification and prediction~\cite{said2023network}. Previous methods such as graph sparsifiers and spanners~\cite{karger1994using, kruskal1956shortest} aimed to reduce graph complexity while retaining key properties, but often lacked alignment with specific graph learning objectives~\cite{yassin2023modular}. Techniques like spectral rewiring~\cite{chan2016optimizing}, Forman curvature-based rewiring~\cite{topping2021understanding}, and graph diffusion~\cite{gasteiger2019diffusion} have also been explored for optimizing graph structures. However, these methods can sometimes lead to information loss or significant graph densification, which complicates learning tasks. Sparse subgraph extraction techniques have focused on community preservation~\cite{yassin2023modular} but remain limited in balancing both sparsity and learning effectiveness. 

Inspired by tree-like substructures in communication networks~\cite{yu2013connected}, we propose that a connected tree subgraph represents the minimal structure required for learning. These sparse trees, derived from control theory, offer efficient representations while preserving critical properties.

Graph learning has advanced significantly~\cite{said2023network}, but determining the ideal graph structure for specific learning objectives remains a challenge~\cite{tsitsulin2023graph}. This motivates exploration of the Graph Lottery Ticket Hypothesis (GLTH), which posits that within any complex graph, there exists a sparse substructure capable of achieving comparable performance to the full graph~\cite{tsitsulin2023graph}, opening new avenues for scalable graph learning~\cite{chen2021unified}.

However, prevailing methods for uncovering these winning tickets often rely on pruning or sampling, risking information loss~\cite{chen2021unified}. This paper introduces a novel control-theoretic approach to discovering these tickets. The Graph Lottery Ticket Hypothesis is articulated as follows:

\vspace{2 mm} \noindent\fbox{\begin{varwidth}{\dimexpr\linewidth-1\fboxsep-2\fboxrule\relax} \emph{Graph Lottery Ticket Hypothesis}~\cite{tsitsulin2023graph}: For any given graph, there exists a sparse subset of edges such that training any graph learning algorithm solely on this subset yields performance comparable to that of the original graph. \end{varwidth}} \vspace{2 mm}

We present a detailed exploration of our control-theoretic approach to discovering Graph Lottery Tickets (GLTs), which we refer to as learning backbones. We propose that the zero-forcing set (ZFS)-based control backbone~\cite{ahmad2023controllability}, a tree, represents the winning ticket. Our method demonstrates superior precision and efficiency in identifying substructures compared to existing techniques. Additionally, we extend this concept by preserving other control properties in the graph. Through experiments on diverse datasets and tasks, we showcase the exceptional performance and sparsity of the winning tickets identified by our approach.

The rest of the paper is organized as follows: Section \ref{sec:prelim_and_problem} introduces important notations and formulates the main problem of graph sparsification for graph classification. Section \ref{sec:NCB} defines the concept of the ZFS-based backbone and proposes several approaches to compute the learning backbone using control properties of networks. Section \ref{sec:results} presents empirical results for graph classification. Finally, Section \ref{sec:conclusion} concludes the paper and discusses future directions.

In the next section, we review some notations to be used in the rest of the paper and explain the main problem.

\section{Preliminaries and Problem Formulation}
\label{sec:prelim_and_problem}

In this section, we establish the fundamental notation to be employed throughout the paper and properly formulate the main problem addressed in this paper.

\subsection{Preliminaries}
\label{subsec:prelim}
An undirected graph $G=(V, E)$ represents a multi-agent network, where the vertex set $V$ represents agents and the edge set $E \subseteq V \times V$ denotes interactions between them. An edge between vertices $u$ and $v$ is denoted by the unordered pair $(u,v)$. The \emph{neighborhood} of vertex $u$ is defined as $\mathcal{N}_G(u) = \{v \in V : (u,v) \in E\}$, and the \emph{degree} of $u$ is $\deg(u) = |\mathcal{N}_G(u)|$. The \emph{average degree} $\bar{d}$ is given by \(
\bar{d} = \frac{1}{|V|} \sum_{v \in V} \deg(v),\)
where $\deg(v)$ is the degree of vertex $v$.

A \emph{path} $P$ in $G$ is a sequence of distinct vertices $(v_1, v_2, \ldots, v_k)$ such that for each $i$ from $1$ to $k-1$, there exists an edge between $v_i$ and $v_{i+1}$. The \emph{distance} between vertices $u$ and $v$, denoted $d_G(u,v)$, is the number of edges in the shortest path between $u$ and $v$. For simplicity, the subscript is dropped when the context is clear. A graph $\hat{G} = (\hat{V}, \hat{E})$ is a \emph{subgraph} of $G = (V, E)$, denoted $\hat{G} \subseteq G$, if $\hat{V} \subseteq V$ and $\hat{E} \subseteq E$.

A \emph{connected component} $C$ of $G$ is a maximal subset of vertices $V' \subseteq V$ such that for every pair of vertices $u, v$ in $V'$, there exists a path between $u$ and $v$. A \emph{tree} is an undirected graph that is connected and acyclic, or equivalently, a graph with $n$ vertices and $n-1$ edges.

\subsection{Problem Formulation}
\label{subsec:problem}

In the context of graph-based machine learning, the graph classification problem is a fundamental task. The objective is to assign a discrete label to an entire graph, indicating the class to which the graph belongs. This task finds applications in various domains, such as cheminformatics, where graphs represent molecules, and social network analysis, where graphs represent interactions between individuals~\cite{hamilton2017inductive}.

Formally, given a collection of graphs $\{G_1, G_2, \ldots, G_k\}$, where each graph $G_i = (V_i, E_i)$ consists of a set of vertices $V_i$ and a set of edges $E_i$, and a corresponding set of labels $\{y_1, y_2, \ldots, y_k\}$ with $y_i \in \{0, 1, \ldots, C\}$, the goal is to learn a function $\phi: \mathcal{G} \rightarrow y$, where $\mathcal{G}$ is the set of all possible graphs, $y \in \{0, 1, \ldots, C\}$, and $C \in \mathbb{Z}$ is the number of possible labels. The function $\phi$ takes an input graph $G_i$ and outputs a label $\tilde{y}_i$, representing the predicted class of the graph. A machine learning approach to this problem involves training a model to generate this discrete labeling: a model $\phi(G_i, \bm{\theta})$ that takes an input graph $G_i$ and outputs a probability score $\phi: \{G_i\} \rightarrow [0,1]^C$ indicating the likelihood of each graph being classified as one of the classes, where $\bm{\theta}$ are the learnable weights. The learned function $\phi(G, \bm{\theta})$ should minimize the classification error $\mathcal{L}(y, \tilde{y})$ on a given set of graphs, where the error is essentially the difference between the predicted label $\tilde{y}$ and the true given label $y$.

\vspace{2 mm}
\noindent\fbox{\begin{varwidth}{\dimexpr\linewidth-1\fboxsep-2\fboxrule\relax}
    \begin{prob}
    \label{prob:Graph_classification}
    \textbf{(Graph Classification):}
    Given a graph \( G = (V, E) \), the goal is to map \( G \) to a discrete label \( y \in \{0, 1, \ldots, C\}\) using a machine learning model with learnable weights \( \bm{\theta} \). The mapping function \( \phi \) can be expressed as:
    \[ \tilde{y} = \phi(G; \bm{\theta}), \]
    where \( \tilde{y} \) is the predicted label for the graph \( G \) and \( \bm{\theta} \) represents the parameters of the model.
    \end{prob}
\end{varwidth}}
\vspace{2 mm}

In many applications of graph-based machine learning, dealing with large and dense graphs can pose significant computational challenges. Graph sparsification is a crucial technique to address this issue, aiming to reduce the number of edges in a graph while preserving its essential properties~\cite{tsitsulin2023graph}. By simplifying the graph structure, sparsification can lead to more efficient algorithms, reduced memory usage, and faster processing times, without significantly compromising the performance of graph-based tasks such as classification. Formally, given a graph \( G = (V, E) \), a sparsification function \( \mathcal{A} \) produces a sparser subgraph \( \hat{G} = (V, \hat{E}) \) such that \( \hat{E} \subseteq E \) and \( |\hat{E}| \ll |E| \). The goal is to ensure that \( \hat{G} \) retains the key structural properties of \( G \) necessary for downstream machine learning tasks.

\vspace{2 mm}
\noindent\fbox{\begin{varwidth}{\dimexpr\linewidth-1\fboxsep-2\fboxrule\relax}
    \begin{prob}
    \label{prob:sparsification}
    \textbf{(Graph Sparsification):}
    Given a graph \( G = (V, E) \) and a label \( y \), the goal is to find a sparsification function $\mathcal{A}: \mathcal{G} \rightarrow \hat{\mathcal{G}}$ such that \( G \mapsto \hat{G} = (V, \hat{E}) \), $\hat{E} \subseteq E$, and \( \phi(\hat{G}, \bm{\theta}) = \tilde{y} \) where the predicted label \( \tilde{y} \) should be the same as the given label \( y \).
    \end{prob}
\end{varwidth}}
\vspace{2 mm}

In light of graph classification, the problem can be framed to incorporate graph sparsification. To leverage sparsification, we first apply a sparsification function \( \mathcal{A} \) to each graph \( G_i \), obtaining a sparser graph \( \hat{G}_i = \mathcal{A}(G_i) \quad \forall i \in \{1, 2, \ldots, k\}\). Then, we learn the classification function \( \phi \) on the set of sparser graphs \( \{\hat{G}_1, \hat{G}_2, \ldots, \hat{G}_k\} \). The objective is to minimize the classification error $\mathcal{L}(y, \tilde{y})$ on the sparsified graphs, ensuring \(\phi(\hat{G}, \bm{\theta}) = \tilde{y} \approx y\) and thus maintaining high classification performance on the original graph set. This idea is presented in Figure \ref{fig:main_problem} where the gray box represents the main focus of this work.

\begin{figure}
\vspace{-5mm}
    \centering
    \includegraphics[scale = 0.45]{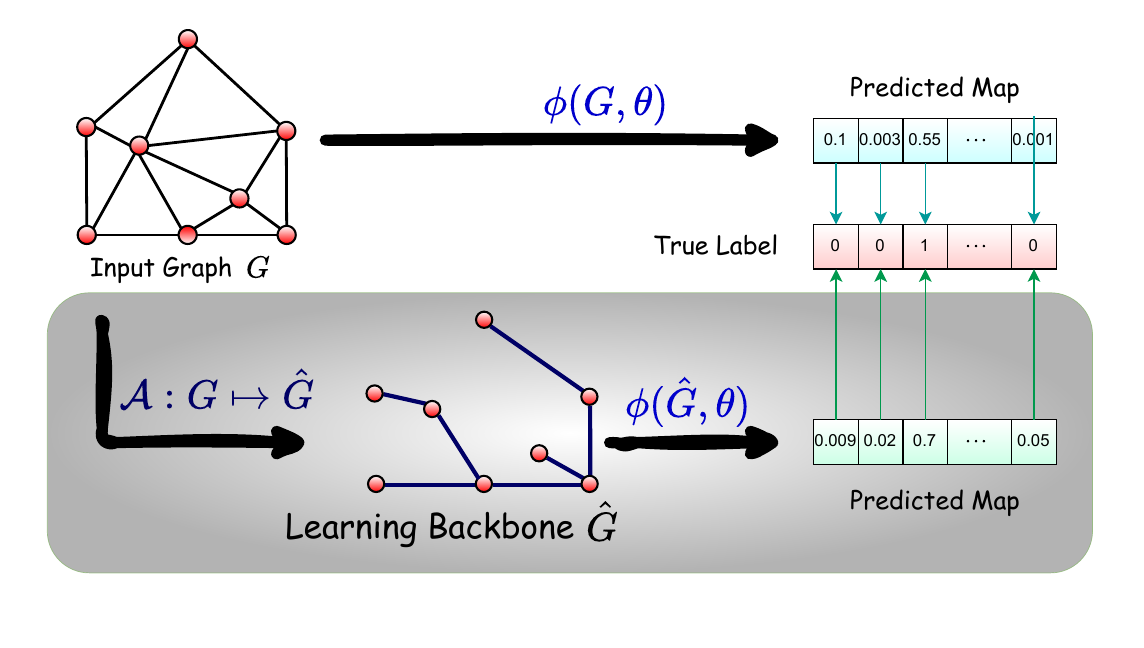}
    \caption{Main Idea: Sparsify the graph while maintaining the critical learning backbone for downstream machine learning tasks such as graph classification. The predicted label $\phi(\hat{G}, \bm{\theta}) = \tilde{y}$ should be close to the true label $y$.}
    \label{fig:main_problem}
\vspace{-5mm}
\end{figure}

In Section \ref{subsec:zfs_backbone}, we propose a novel approach to sparsify a graph for computing the learning backbone. The sparse graph we propose is a tree, as it is the minimum connected graph structure. Finding a tree graph for a given graph is not a trivial task, as it can be computationally expensive to identify a suitable tree that preserves the essential properties of the original graph required for downstream learning tasks. The exact number of spanning trees of a given graph can be computed by the Matrix Tree Theorem~\cite{godsil2001algebraic}. The number of spanning trees of a given graph \( G = (V, E) \) is the normalized product of the non-zero eigenvalues of the Laplacian matrix, and it can be as large as 
\(
\left( \frac{2m - \Delta - \delta - 1}{n-3} \right)^{n-3},
\)
where \( n \) is the number of nodes and \( n > 3 \), \( m \) is the number of edges, \( \Delta \) is the maximum degree, and \( \delta \) is the minimum degree~\cite{li2010number}. Note that the number of spanning trees in a given graph can be exponential with respect to the number of vertices.

In the next section, we present our proposed approach for finding a learning tree backbone.

\section{Learning Backbone}
\label{sec:NCB}

Control theory is a branch of engineering and mathematics focused on the behavior of dynamical systems with inputs. Its goal is to develop a control strategy that governs the system's output by manipulating the inputs. A core concept in control theory is feedback, where the system's output is measured and used to adjust the inputs to maintain desired performance. Control theory has applications in robotics, aerospace, and economics. The structural properties of the underlying graph, representing the network of interconnected components, significantly influence a system's dynamic behavior. For instance, the presence or absence of specific edges can affect the stability, controllability, and observability of the system~\cite{abbas2023zero}.

The dynamic behavior of a control system is closely related to graph-based machine learning. In machine learning, how information propagates through a graph is crucial for tasks such as node classification, link prediction, and graph classification~\cite{said2023network, said2024improving}. The graph structure dictates how signals spread across the network, influencing the performance of graph neural networks (GNNs) and other models. Sparse representations, like trees, play a vital role by preserving essential control properties, such as connectivity and controllability~\cite{ahmad2023controllability}, while reducing computational complexity. This approach aligns with the Graph Lottery Ticket Hypothesis, which suggests that a sparse substructure within a complex graph can achieve comparable performance to the original, optimizing both control and learning objectives.

In this section, we explore the concept of the network controllability backbone, which aims to identify a sparse subset of edges that preserves the network's controllability under structural perturbations. We begin by establishing the fundamental framework for understanding controllability in networked systems and then introduce the concept of a learning backbone.

\subsection{Controllability Framework}
Consider a network of \( n \) agents, denoted by \( V = \{v_1, v_2, \cdots, v_n\} \). Among these agents, \( m \) are designated as input or leader vertices, represented as \( V_\ell = \{\ell_1, \ell_2, \cdots, \ell_m\} \subseteq V \), while the remaining vertices act as followers. The network's dynamics are modeled by the following linear time-invariant system:
\begin{equation}
\label{eq:system}
\dot{x}(t) = Mx(t) + Hu(t),
\end{equation}
where \( x(t) \in \mathbb{R}^n \) is the state vector, and \( u(t) \in \mathbb{R}^m \) represents the external input injected through the \( m \) leaders. The matrix \( M \in \mathcal{M}(G) \) is the system matrix associated with the graph \( G \), and \( H \in \mathbb{R}^{n \times m} \) is the input matrix determined by the leader vertices. The family of matrices \( \mathcal{M}(G) \) is defined as follows:
\begin{equation}
\label{eq:mtx_family}
\begin{split}
\mathcal{M}(G) = \{M \in \mathbb{R}^{n\times n} \;& : \;M = M^\top, \text{ and for } i \neq j, \\
& M_{ij} \neq 0 \Leftrightarrow (i,j) \in E(G)\}.
\end{split}
\end{equation}
This definition encompasses a broad class of system matrices associated with the graph \( G \), including the adjacency matrix, Laplacian matrix, and the signless Laplacian matrix.

A system \eqref{eq:system} is \emph{controllable} if an input \( u(t) \) can drive the system from any initial state \( x(t_0) \) to any desired state \( x(t_f) \) in finite time. We say that \( (M, H) \) is a \emph{controllable pair} if and only if the controllability matrix \( \mathcal{C}(M, H) \in \mathbb{R}^{n \times nm} \) is full rank, i.e., \( \texttt{rank}(\mathcal{C}(M, H)) = n \). The controllability matrix is given by:
\begin{equation}
\label{eq:Gamma}
\mathcal{C}(M, H) = \left[ H \quad MH \quad M^2H \quad \cdots \quad M^{n-1}H \right].
\end{equation}

\begin{definition} (Strong Structural Controllability (SSC))
A graph \( G = (V, E) \) with a specified set of leaders \( V_\ell \subseteq V \) (and the corresponding \( H \) matrix) is \emph{strongly structurally controllable} if and only if \( (M, H) \) is a controllable pair for all \( M \in \mathcal{M}(G) \).
\end{definition}

If the network \( G \) is strongly structurally controllable for a given set of leaders, then the rank of the controllability matrix does not depend on the edge weights (as long as they satisfy the conditions given by \(\mathcal{M}(G)\)). For the remainder of this paper, we will refer to strong structural controllability simply as \emph{controllability}. The dimension of the strongly structurally controllable subspace, denoted by \( \gamma(G, V_\ell) \), is the smallest possible rank of the controllability matrix under feasible edge weights.

\subsubsection*{Network Controllability Backbone}

The main idea of a controllability backbone is to identify a minimal subset of edges within a network that ensures the preservation of its controllability in any subgraph. We define the \emph{controllability backbone} as this sparse subgraph, denoted by $B$, such that any subgraph $\hat{G}$ containing $B$ maintains at least the same level of controllability as the original network $G$.

\begin{definition} (Controllability Backbone)
For a given graph \( G = (V, E) \) and a set of leaders \( V_\ell \subseteq V \), the controllability backbone \( B = (V, E_{B}) \) is a subgraph of \( G \) such that any subgraph \( \hat{G} = (V, \hat{E}) \) containing \( E_{B} \), i.e., \( E_{B} \subseteq \hat{E} \subseteq E \), satisfies:
\begin{equation}
\gamma(\hat{G}, V_\ell) \geq \gamma(G, V_\ell).
\end{equation}
\end{definition}

In essence, the controllability backbone ensures that the controllability of any subgraph encompassing it does not deteriorate compared to the original network. 

\subsection{Zero Forcing for Controllability Backbone}
\label{subsec:zfs_backbone}

Zero forcing is a rule-based coloring technique for vertices in a graph, providing a lower bound on the dimension of Strong Structural Controllability (SSC). By leveraging zero forcing, our aim is to identify a subset of edges constituting the controllability backbone, termed as the ZFS-based backbone.

\begin{definition}[\emph{Zero Forcing (ZF) Process}]
Let \( G = (V, E) \) be a graph where each vertex \( v \in V \) is initially colored either \texttt{black} or \texttt{white}. The ZF process iteratively changes the color of \texttt{white} vertices to \texttt{black} according to the following rule until no further color changes are possible: \emph{Color change rule: If a \texttt{black} vertex \( v \in V \) has exactly one \texttt{white} neighbor \( u \), change the color of \( u \) to \texttt{black}.}
\end{definition}

We define a \emph{forced} relationship between vertices \( v \) and \( u \) if a \texttt{black} vertex \( v \) changes the color of a \texttt{white} vertex \( u \) to \texttt{black} during the ZF process.

\begin{definition}[\emph{Derived Set}]
Let \( G = (V, E) \) be a graph with \( V_\ell \subseteq V \) representing the initial set of \texttt{black} vertices. The derived set~\cite{work2008zero}, denoted by \( dset(G, V_\ell) \), is the set of \texttt{black} vertices obtained after the ZF process, and \( |dset(G, V_\ell)| = \zeta(G, V_\ell) \). When the context is clear, we omit the parameter \( V_\ell \).
\end{definition}

The set of initial \texttt{black} vertices \( V_\ell \) is also known as the \emph{input or leader set}. For a given \( V_\ell \), \( dset(G, V_\ell) \) is unique~\cite{work2008zero}. Now, we define the zero forcing set.

\begin{definition}[\emph{Zero Forcing Set (ZFS)}]
For a graph \( G = (V, E) \), \( V_\ell \subseteq V \) is a ZFS if and only if \( dset(G, V_\ell) = V \). We denote a ZFS of \( G \) by \( Z(G) \). 
\end{definition}

Figure~\ref{fig:ZFS} illustrates zero forcing through a set of input vertices and the corresponding derived set. Initially, \( V_\ell = \{v_1, v_2, v_5, v_6\} \) are colored black. In the next step, \( v_2 \) can force \( v_3 \) as it is its only white neighbor and so on.

\begin{figure}[ht]
	\centering
	\includegraphics[scale=0.60]{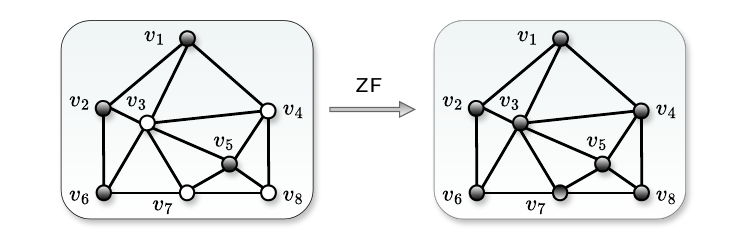}
    \caption{\( V_\ell = \{v_1, v_2, v_5, v_6\} \) is the input set. After the ZF process, \( dset(G, V_\ell) = V \), as indicated by the \texttt{black} vertices. Hence, \( V_\ell \) is a ZFS.}
    \label{fig:ZFS}
    % \vspace{-0.5cm}
\end{figure}

The zero forcing phenomenon is significant in characterizing the network's SSC~\cite{
% monshizadeh2014zero, trefois2015zero, 
yaziciouglu2022strong}. In particular, the size of the derived set for a given set of input vertices provides a lower bound on the dimension of SSC, i.e., for a network \( G = (V, E) \) with the leader set \( V_\ell \subseteq V \), we have \( \zeta(G, V_\ell) \leq \gamma(G, V_\ell) \)~\cite{ahmad2023graph,
yaziciouglu2022strong
% monshizadeh2014zero, 
% monshizadeh2015strong
}. By computing the ZFS, we obtain a lower bound on the dimension of SSC, facilitating the identification of the controllability backbone.

\subsubsection*{ZFS-based Backbone}

Our goal is to discover a backbone that maintains the zero forcing bound $\zeta(G, V_\ell)$ for a given leader set $V_\ell$. The objective is to identify a subset of edges $E_{B_Z}$ in the graph $G = (V, E)$ with $V_\ell$ such that the ZFS-based controllability bound is preserved in any subgraph $\hat{G} = (V, \hat{E})$ containing those edges ($E_{B_Z} \subseteq \hat{E}$). Formally, we define the ZFS-based backbone as follows:

\begin{definition} \label{def:zfs_bckbn} (\emph{ZFS-based Backbone}) Given a graph $G = (V, E)$ and a leader set $V_\ell$, the ZFS-based backbone, denoted as $B_z = (V, E_{B_z})$, is a subgraph where any subgraph $\hat{G} = (V, \hat{E})$ containing $E_{B_Z}$ satisfies \(\zeta(\hat{G}, V_\ell) \ge \zeta(G, V_\ell).\)
\end{definition}

Thus, in any subgraph of $G$ containing the ZFS-based backbone, the dimension of SSC is at least $\zeta(G, V_\ell)$, i.e., $\gamma(\hat{G}, V_\ell) \ge \zeta(G, V_\ell)$.

\subsection{Controllability Backbone as Learning Backbone}

In graph-based learning, optimizing the underlying graph structure for both control and information propagation is crucial. The Zero Forcing Set (ZFS) method is a powerful tool for maintaining network controllability. By leveraging the principles of strong structural controllability (SSC), we can identify a minimal subset of edges, termed the ZFS-based controllability backbone, which preserves the essential control properties of the original graph. This backbone, effectively forming a sparse substructure, ensures robust dynamic behavior while significantly reducing computational complexity. Using the ZFS-based controllability backbone as the learning backbone aims to enhance the efficiency of graph classification tasks while maintaining the critical control properties of the original network.

It has been shown that the ZFS-based backbone $B_z$ is a set of paths originating from vertices $v\in V_\ell$, always having $n - |V_\ell|$ edges, and consequently, $|V_\ell|$ connected components \cite{ahmad2023controllability}. If the original graph is connected, edges can be added to form a subgraph $G^\prime = (V, E^\prime)$ such that $E_{B_Z} \subseteq E^\prime \subseteq E$, and $G^\prime$ is a connected tree. This tree, known as the learning backbone, substitutes the original tree and can be used for downstream machine-learning tasks. This approach is presented in Algorithm \ref{alg:backbone_zfs}.

%------------- Pseudo-code --------------------
\begin{algorithm}[ht]
\caption{Computing Learning Backbone}
\label{alg:backbone_zfs}
    \begin{algorithmic}[1]
    \renewcommand{\algorithmicrequire}{\textbf{Input:}}
    \renewcommand{\algorithmicensure}{\textbf{Output:}}
    \Require Graph $G = (V, E)$
    \Ensure Learning backbone $\hat{G} = (V, \hat{E})$
        \State Compute a zero-forcing set $V_\ell$ \cite{ahmad2023graph}
        \State Initialize a graph $B_z$ with paths originating from all vertices $v \in V_\ell$ by running the zero-forcing process
        \State Add any $|V_\ell| -1$ edges from $G$ to $B_z$ to form $\hat{G}$ such that $\hat{G}$ becomes a connected tree
    \end{algorithmic}
\end{algorithm}

\begin{theorem}
\label{thm:zfs_backbone}
Given a graph $G=(V,E)$, Algorithm~\ref{alg:backbone_zfs} returns a learning backbone, a \emph{connected tree}, that is strong structurally controllable for the computed leader set $V_\ell$.
\end{theorem}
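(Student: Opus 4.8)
The plan is to verify the two assertions of the statement separately: first that the output $\hat G$ is a connected tree, and then that the computed leader set $V_\ell$ certifies strong structural controllability of $\hat G$.

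For the tree claim, I would begin from the structural description of the ZFS-based backbone recalled just before Algorithm~\ref{alg:backbone_zfs} (from \cite{ahmad2023controllability}): $B_z$ is a disjoint union of paths rooted at the vertices of $V_\ell$, hence it is acyclic, spans all $n = |V|$ vertices, has exactly $n - |V_\ell|$ edges, and consequently has exactly $|V_\ell|$ connected components. Next I would argue that Step~3 of the algorithm is always executable under the standing assumption that $G$ is connected: contracting each of the $|V_\ell|$ components of $B_z$ to a single super-vertex turns $G$ into a connected multigraph on $|V_\ell|$ vertices, and any spanning tree of that multigraph consists of $|V_\ell|-1$ edges of $E$; lifting those edges back and adding them to $B_z$ joins all components without closing any cycle, since an edge placed between two still-distinct components can never create a cycle. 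Hence $\hat G$ is connected, has $(n - |V_\ell|) + (|V_\ell| - 1) = n - 1$ edges on $n$ vertices, and is therefore a tree by the characterization in Section~\ref{subsec:prelim}.

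For the controllability claim, the key observation is that $\hat E$ satisfies $E_{B_Z} \subseteq \hat E \subseteq E$ by construction, so $\hat G$ is exactly the type of subgraph to which Definition~\ref{def:zfs_bckbn} applies, giving $\zeta(\hat G, V_\ell) \ge \zeta(G, V_\ell)$. Since $V_\ell$ is computed in Step~1 to be a zero forcing set of $G$, we have $\zeta(G, V_\ell) = n$; together with the trivial bound $\zeta(\hat G, V_\ell) \le |V| = n$ this forces $\zeta(\hat G, V_\ell) = n$, i.e.\ $V_\ell$ is a ZFS of $\hat G$ as well. Finally, invoking the inequality $\zeta(\hat G, V_\ell) \le \gamma(\hat G, V_\ell)$ stated in Section~\ref{subsec:zfs_backbone} together with $\gamma(\hat G, V_\ell) \le n$ yields $\gamma(\hat G, V_\ell) = n$, which is precisely the statement that $\hat G$ is strongly structurally controllable for the leader set $V_\ell$.

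I expect the main obstacle to be the middle step of the tree argument: justifying rigorously that Step~3 is well-posed, i.e.\ that $|V_\ell|-1$ edges of $E$ can always be selected so that $\hat G$ becomes connected (and hence automatically acyclic, being connected with $n-1$ edges). This is where the hypothesis that $G$ is connected is indispensable and where the short spanning-forest/matroid argument above is needed; the remaining steps follow immediately from the cited structural property of $B_z$ and the chain of inequalities $\zeta(G,V_\ell) \le \zeta(\hat G, V_\ell) \le \gamma(\hat G, V_\ell) \le n$.
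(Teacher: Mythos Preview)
Your proposal is correct and follows essentially the same route as the paper's proof: both invoke Definition~\ref{def:zfs_bckbn} on the sandwich $E_{B_Z}\subseteq \hat E\subseteq E$ to get $\zeta(\hat G,V_\ell)\ge \zeta(G,V_\ell)=n$, and then pass to $\gamma(\hat G,V_\ell)=n$ via the zero-forcing lower bound. Your treatment is in fact more careful than the paper's, which does not explicitly justify the feasibility of Step~3 or the edge count $n-1$; your contraction/spanning-tree argument fills a gap that the paper simply asserts away.
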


\begin{proof}
    For any given graph $G = (V, E)$ and leader set $V_\ell$, any subgraph $\hat{G} = (V,\hat{E})$, where  $E_{B_Z} \subseteq \hat{E} \subseteq E$, satisfies the relation $$\zeta(\hat{G}, V_\ell) \ge \zeta(G, V_\ell)$$ by definition. In step 1 of Algorithm \ref{alg:backbone_zfs}, we compute a zero-forcing set that makes the graph fully controllable. Hence, $\gamma(G, V_\ell) = |V|$. The ZFS-based backbone $B_z$ contains an unconnected set of paths where each path originates from a leader vertex. $B_z$ can be computed from Algorithm 1 of \cite{ahmad2023controllability}. By definition of $B_z$, we can add any number of edges from the original graph randomly, and the graph will remain fully controllable. The learning backbone $\hat{G}$ contains only the edges that are in the original graph besides containing the controllability backbone $B_z$. Hence, we can compute a connected tree with $n-1$ edges from Algorithm \ref{alg:backbone_zfs} where the tree would be strong structurally controllable for the computed zero-forcing set $V_\ell$.
\end{proof}

\begin{figure*}
    \vspace{-0.8cm}
    \centering
    \includegraphics[width = \textwidth]{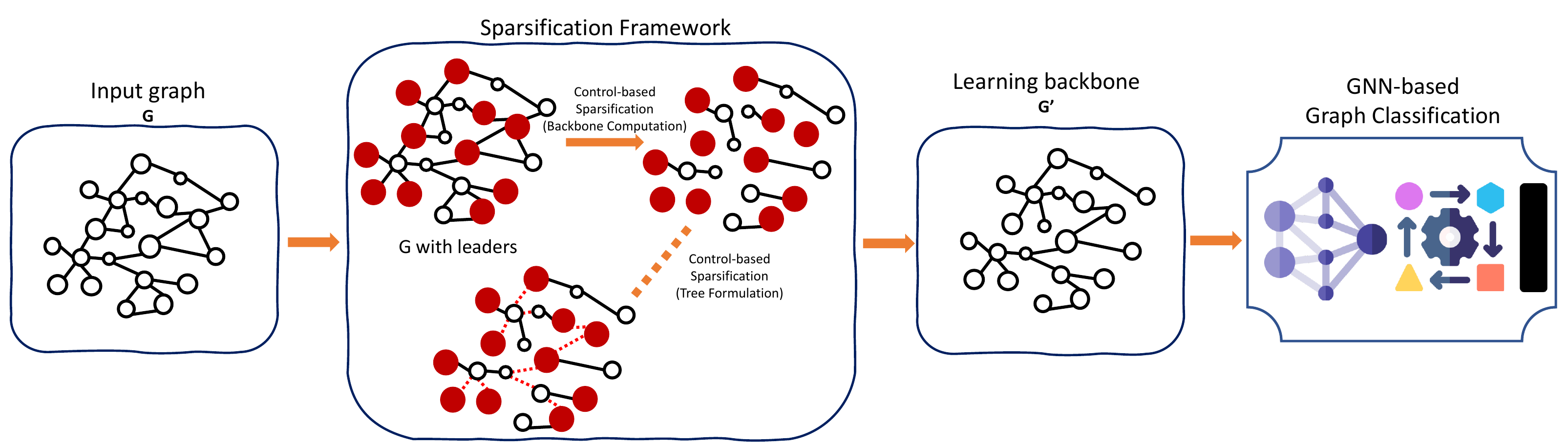}
    \caption{Illustration of the proposed framework: The process begins by identifying a leader vertex subset within the graph. Using network control theory, a graph sparsification framework is then applied to derive a tree-like structure, called the `learning backbone', from the original graph.}
    \label{fig:block-dia}
    \vspace{-0.3cm}
\end{figure*}

\subsection{Generalized Learning Backbone}
\label{subsec:gen_backbone}

In networked systems, preserving various control properties is crucial for robust performance and effective information propagation. While the Zero Forcing Set (ZFS) method ensures network controllability, other control properties, such as controllability matrices, graph distances, and structural patterns, also significantly impact dynamical system behavior.

Graph distances, representing the shortest paths between vertices, are particularly important for understanding how control signals propagate through a network. The distribution of these distances influences system stability and responsiveness~\cite{yaziciouglu2022strong}. In closely connected networks, control inputs more efficiently affect the entire system. The distances between all vertices and leaders also determine the lower bound on controllability rank, $\gamma(G, V_\ell)$~\cite{yaziciouglu2016graph}. For a network \(G = (V,E)\) with leaders \(V_\ell = \{\ell_1, \ell_2, \cdots , \ell_m\}\), the \emph{distance-to-leader (DL) vector} for each \(v_i \in V\) is defined as
\[
D_i = \left[\begin{array}{lllll}
    d(\ell_1, v_i) & d(\ell_2, v_i) & \cdots & d(\ell_{m}, v_i)
\end{array}\right]^T \in \mathbb{Z}^m,
\]
where \([D_i]_j = d(\ell_j, v_i)\) is the distance between leader \(\ell_j\) and vertex \(v_i\). The maximum sequence of these DL vectors that meets certain constraints defines the lower bound on \(\gamma(G, V_\ell)\).

Ahmad et al. introduced the distance-based controllability backbone $B_d = (V, E_{B_d})$, which emphasizes preserving key distances between vertex pairs, unlike the ZFS-based backbone $B_z$, which focuses on tree-like structures~\cite{ahmad2023controllability}. While $B_z$ ensures controllability via paths, $B_d$ maintains critical distances while preserving sparsity with $O(n)$ edges, where $n$ is the number of vertices.

Incorporating graph distances into backbone construction enhances control properties and ensures robust dynamic behavior, supporting downstream tasks like graph classification by retaining the network's structural integrity, as demonstrated in Section \ref{sec:results}.

In summary, while the ZFS-based method is valuable for ensuring controllability, considering additional control properties like graph distances offers a more comprehensive approach. The distance-based backbone balances sparsity with the preservation of critical features, providing robust control across various applications. We incorporate the distance-based backbone in our experiments, detailed in Section \ref{sec:results}.

\section{Experimental Results}
\label{sec:results}

In this section, we offer a comprehensive evaluation of the proposed framework within the context of graph classification tasks, employing real-world social networks and molecular datasets. We introduce the datasets and then provide a detailed description of the experimental setup. Following the setup, we discuss the results, elucidating the efficacy of our framework.

\begin{table*}[!t]
\centering
\caption{Dataset stats}
\begin{tabular}{|c|c|cc|cc|cccc|}
\hline
\multirow{3}{*}{Dataset} & \multirow{3}{*}{\# of Graphs} & \multicolumn{2}{c|}{\multirow{2}{*}{\# of Nodes}} & \multicolumn{2}{c|}{\multirow{2}{*}{Average Degree $\bar{d}$}} & \multicolumn{4}{c|}{Density}                                                                 \\ \cline{7-10} 
                         &                               & \multicolumn{2}{c|}{}                             & \multicolumn{2}{c|}{}                                          & \multicolumn{2}{c|}{Original}                           & \multicolumn{2}{c|}{Backbone}      \\ \cline{3-10} 
                         &                               & \multicolumn{1}{c|}{min}           & max          & \multicolumn{1}{c|}{Original}            & Backbone            & \multicolumn{1}{c|}{min}   & \multicolumn{1}{c|}{max}   & \multicolumn{1}{c|}{min}   & max   \\ \hline
MUTAG                    & 188                           & \multicolumn{1}{c|}{10}            & 28           & \multicolumn{1}{c|}{2.189}                     & 1.88                      & \multicolumn{1}{c|}{0.082} & \multicolumn{1}{c|}{0.222} & \multicolumn{1}{c|}{0.071} & 0.20  \\ 
% \hline
PTC                      & 344                           & \multicolumn{1}{c|}{2}             & 64           & \multicolumn{1}{c|}{1.981}                     & 1.862                     & \multicolumn{1}{c|}{0.034} & \multicolumn{1}{c|}{1.0}   & \multicolumn{1}{c|}{0.031} & 1.0   \\ 
% \hline
PROTEINS                 & 1113                          & \multicolumn{1}{c|}{4}             & 620          & \multicolumn{1}{c|}{3.735}                     & 1.893                     & \multicolumn{1}{c|}{0.005} & \multicolumn{1}{c|}{1.0}   & \multicolumn{1}{c|}{0.003} & 0.5   \\ 
% \hline
NCI1                     & 4110                          & \multicolumn{1}{c|}{3}             & 111          & \multicolumn{1}{c|}{2.155}                     & 1.908                     & \multicolumn{1}{c|}{0.019} & \multicolumn{1}{c|}{0.667} & \multicolumn{1}{c|}{0.018} & 0.667 \\ 
% \hline
Deezer Ego               & 9,629                         & \multicolumn{1}{c|}{11}            & 363          & \multicolumn{1}{c|}{4.292}                     & 1.887                     & \multicolumn{1}{c|}{0.015} & \multicolumn{1}{c|}{0.909} & \multicolumn{1}{c|}{0.006} & 0.182 \\ 
% \hline

GitHub Stargazers        & 12,725                        & \multicolumn{1}{c|}{10}            & 957          & \multicolumn{1}{c|}{3.111}                     & 1.939                     & \multicolumn{1}{c|}{0.003} & \multicolumn{1}{c|}{0.561} & \multicolumn{1}{c|}{0.004} & 0.200 \\ 
% \hline
Twitch Ego               & 127,094                       & \multicolumn{1}{c|}{12}            & 52           & \multicolumn{1}{c|}{5.397}                     & 1.922                     & \multicolumn{1}{c|}{0.038} & \multicolumn{1}{c|}{0.967} & \multicolumn{1}{c|}{0.038} & 0.143 \\ 
% \hline
Reddit Threads           & 203,088                       & \multicolumn{1}{c|}{11}            & 97           & \multicolumn{1}{c|}{2.039}                     & 1.889                     & \multicolumn{1}{c|}{0.021} & \multicolumn{1}{c|}{0.328} & \multicolumn{1}{c|}{0.021} & 0.182 \\ 
\hline
\end{tabular}
\vspace{-0.5cm}
\label{tab:dataset-stats}
\end{table*}

\subsection{Datasets}

We evaluate our proposed approach using eight real-world datasets relevant for binary graph classification tasks. These datasets include MUTAG, PTC, PROTEINS, NCI1, Deezer Ego Network, GitHub Stargazers, Twitch Ego Networks, and Reddit Threads~\cite{morris2020tudataset, karateclub}. Each dataset presents unique challenges in graph classification, offering a comprehensive testbed for assessing the effectiveness of our ZFS-based backbone approach.

\subsection{Experimental Setup}

We evaluate six widely recognized graph convolution methods: $k$-GNN, GraphSAGE, GCN, Transformer Convolution (UniMP), Residual Gated Graph ConvNets (ResGatedGCN), and Graph Attention Network (GAT). The proposed learning frameworks consists of three GNN layers, each with 64 hidden units. After the GNN layers, we apply Sort Aggregation, followed by two 1D convolution layers with Max Pooling. The output is then passed through a two-layer multi-layer perceptron, each layer containing 32 hidden neurons.

For evaluation, we perform 10-fold cross-validation, training each model for 100 epochs. The learning rate is set to $1 \times 10^{-4}$, and weight decay is $5 \times 10^{-4}$. All experiments are conducted on a Lambda machine with an AMD Ryzen Threadripper PRO 3975WX 32-Core CPU, 512 GB of RAM, and an NVIDIA RTX 3090 GPU with 16 GB of memory.

We present the ROC AUC (Receiver Operating Characteristic Area Under the Curve) classification results for all eight datasets, comparing the original graphs and the ZFS-based tree backbone graphs in Table \ref{tab:results_zfs}. Consistent architectures and experimental settings are used for evaluation. Overall, the performance between the backbone graphs $B_z$ and the original graphs is comparable across all datasets.

\subsection{Results Analysis}

\begin{table*}[!t]
\centering
\caption{Comparison of ROC AUC scores of the proposed method (ZFS-based backbone) against original graphs. Pairs where the backbone ROC AUC is within 5\% of the original are highlighted in blue. Additionally, backbone values higher than the original are \textbf{bolded}.}
\resizebox{\textwidth}{!}{%
\begin{tabular}{|l|cc|cc|cc|cc|cc|cc|}
\hline
\multirow{2}{*}{Datasets} & \multicolumn{2}{c|}{\textbf{k-GNN}}      & \multicolumn{2}{c|}{\textbf{SAGE}}        & \multicolumn{2}{c|}{\textbf{GCN}}   & \multicolumn{2}{c|}{\textbf{UniMP}}   & \multicolumn{2}{c|}{\textbf{ResGatedGCN}}   & \multicolumn{2}{c|}{\textbf{GAT}}     \\ \cline{2-13} 
                          & \multicolumn{1}{c|}{Original} & Backbone & \multicolumn{1}{c|}{Original} & Backbone & \multicolumn{1}{c|}{Original} & Backbone & \multicolumn{1}{c|}{Original} & Backbone & \multicolumn{1}{c|}{Original} & Backbone & \multicolumn{1}{c|}{Original} & Backbone   \\ \hline

Deezer Ego    & \multicolumn{1}{c|}{\cellcolor{blue!05}50.98} & \cellcolor{blue!05}\textbf{52.07} & \multicolumn{1}{c|}{\cellcolor{blue!05}50.82} & \cellcolor{blue!05}\textbf{52.54} & \multicolumn{1}{c|}{\cellcolor{blue!05}48.45} & \cellcolor{blue!05}\textbf{52.66} & \multicolumn{1}{c|}{\cellcolor{blue!05}50.34} & \cellcolor{blue!05}\textbf{52.65} & \multicolumn{1}{c|}{\cellcolor{blue!05}51.72} & \cellcolor{blue!05}\textbf{54.21} & \multicolumn{1}{c|}{\cellcolor{blue!05}50.54} & \cellcolor{blue!05}\textbf{50.88} \\ 
% \hline

Twitch Ego    & \multicolumn{1}{c|}{\cellcolor{blue!05}72.23} & \cellcolor{blue!05}\textbf{72.36} & \multicolumn{1}{c|}{\cellcolor{blue!05}72.34} & \cellcolor{blue!05}\textbf{72.47} & \multicolumn{1}{c|}{\cellcolor{blue!05}72.44} & \cellcolor{blue!05}72.25 & \multicolumn{1}{c|}{\cellcolor{blue!05}72.35} & \cellcolor{blue!05}\textbf{72.47} & \multicolumn{1}{c|}{\cellcolor{blue!05}72.42} & \cellcolor{blue!05}\textbf{72.49} & \multicolumn{1}{c|}{\cellcolor{blue!05}72.37} & \cellcolor{blue!05}\textbf{72.47} \\ 
% \hline

GitHub Stargazers & \multicolumn{1}{c|}{\cellcolor{blue!05}71.47} & \cellcolor{blue!05}68.54 & \multicolumn{1}{c|}{\cellcolor{blue!05}64.95} & \cellcolor{blue!05}61.85 & \multicolumn{1}{c|}{\cellcolor{blue!05}65.58} & \cellcolor{blue!05}64.65 & \multicolumn{1}{c|}{\cellcolor{blue!05}65.59} & \cellcolor{blue!05}65.45 & \multicolumn{1}{c|}{\cellcolor{blue!05}72.55} & \cellcolor{blue!05}68.52 & \multicolumn{1}{c|}{\cellcolor{blue!05}65.01} & \cellcolor{blue!05}62.03 \\ 
% \hline

Reddit Threads    & \multicolumn{1}{c|}{\cellcolor{blue!05}83.80} & \cellcolor{blue!05}83.40 & \multicolumn{1}{c|}{\cellcolor{blue!05}82.99} & \cellcolor{blue!05}\textbf{83.45} & \multicolumn{1}{c|}{\cellcolor{blue!05}83.06} & \cellcolor{blue!05}\textbf{83.26} & \multicolumn{1}{c|}{\cellcolor{blue!05}83.87} & \cellcolor{blue!05}83.54 & \multicolumn{1}{c|}{\cellcolor{blue!05}83.87} & \cellcolor{blue!05}83.55 & \multicolumn{1}{c|}{\cellcolor{blue!05}83.84} & \cellcolor{blue!05}83.05 \\ 
% \hline

MUTAG            & \multicolumn{1}{c|}{\cellcolor{blue!05}93.20} & \cellcolor{blue!05}90.13 & \multicolumn{1}{c|}{\cellcolor{blue!05}86.02} & \cellcolor{blue!05}\textbf{92.95} & \multicolumn{1}{c|}{\cellcolor{blue!05}88.07} & {\cellcolor{blue!05}\textbf{92.31}} & \multicolumn{1}{c|}{91.92} & 82.94 & \multicolumn{1}{c|}{\cellcolor{blue!05}92.95} & \cellcolor{blue!05}92.95 & \multicolumn{1}{c|}{\cellcolor{blue!05}90.25} & \cellcolor{blue!05}\textbf{95.13} \\ 
% \hline

PTC             & \multicolumn{1}{c|}{\cellcolor{blue!05}49.10} & \cellcolor{blue!05}\textbf{56.80} & \multicolumn{1}{c|}{4\cellcolor{blue!05}7.79} & \cellcolor{blue!05}\textbf{56.07} & \multicolumn{1}{c|}{50.93} & 46.87 & \multicolumn{1}{c|}{\cellcolor{blue!05}48.36} & {\cellcolor{blue!05}\textbf{57.40}} & \multicolumn{1}{c|}{57.53} & 48.70 & \multicolumn{1}{c|}{\cellcolor{blue!05}53.33} & {\cellcolor{blue!05}\textbf{57.07}} \\ 
% \hline

PROTEINS        & \multicolumn{1}{c|}{\cellcolor{blue!05}78.65} & \cellcolor{blue!05}75.62 & \multicolumn{1}{c|}{\cellcolor{blue!05}77.59} & \cellcolor{blue!05}73.78 & \multicolumn{1}{c|}{78.37} & 72.36 & \multicolumn{1}{c|}{\cellcolor{blue!05}77.86} & \cellcolor{blue!05}76.25 & \multicolumn{1}{c|}{\cellcolor{blue!05}77.02} & \cellcolor{blue!05}75.34 & \multicolumn{1}{c|}{77.62} & 72.95 \\ 
% \hline

NCI1            & \multicolumn{1}{c|}{77.78} & 69.60 & \multicolumn{1}{c|}{\cellcolor{blue!05}69.23} & \cellcolor{blue!05}67.92 & \multicolumn{1}{c|}{72.34} & 61.66 & \multicolumn{1}{c|}{\cellcolor{blue!05}70.63} & \cellcolor{blue!05}67.63 & \multicolumn{1}{c|}{72.50} & 66.41 & \multicolumn{1}{c|}{71.72} & 66.89 \\ 
\hline

\end{tabular}
}

\label{tab:results_zfs}
\end{table*}

For certain datasets, such as Deezer Ego and PTC, we observe from Table \ref{tab:results_zfs} that the ZFS-based backbone—derived from zero forcing-based controllability—serves as a more effective representation for graph learning tasks. This is exemplified by the notable performance improvement seen with the PTC dataset, where the application of a UniMP baseline GNN model on the backbone graph resulted in a maximum performance enhancement of 9.04\%. In fact, \emph{in 20 out of 48 combinations, the backbone representation resulted in a better ROC AUC compared to the original graphs}. Moreover, \emph{in 38 out of 48 combinations, the backbones exhibited less than 5\% deterioration in ROC AUC}, further underscoring the potential of our proposed backbone to not only simplify the graph structure but also to potentially uncover more salient features pertinent to the learning task.

Conversely, it is crucial to acknowledge instances where the proposed backbone representation led to a decrease in performance. The most significant reduction was observed with the NCI1 dataset, where the application of the GCN baseline model on the backbone graph saw a decline in ROC AUC by 10.68\%. This suggests that while the proposed backbone can generally maintain or improve performance, there may be specific scenarios or datasets where the full topology of the original graph is necessary to capture the nuances required for better classification.

In Section \ref{sec:NCB}, we introduced two methodologies for deriving control backbones from networks: the ZFS approach and the distance-based approach. Both methods are designed to ensure network controllability. Similar to $B_z$, the distance-based backbone, denoted as $B_d$ \cite{ahmad2023controllability}, is crafted to maintain the lower controllability bound, preserving the network's control characteristics. We evaluated the effectiveness of these backbones by comparing their performance to the original graphs, and included random spanning trees, constructed using Kruskal's algorithm, as a baseline for learning and controllability. The empirical results, shown in Figure \ref{fig:results}, reveal a clear trend: in most cases, the control backbones outperform the original graphs across various datasets and models. \emph{In 67\% of cases, the control backbones improve ROC AUC compared to the original graphs, with less than 5\% deterioration in the remaining cases}. Additionally, \emph{the control backbones outperform random spanning trees in approximately 80\% of cases, with less than 2\% deterioration in the rest}. These results demonstrate that the ZFS- and distance-based control backbones provide an effective solution for simplifying network structures while retaining essential control and learning properties.

\begin{figure*}[!t]
\centering
% First row of subfigures
    \begin{subfigure}{0.235\textwidth}
        \centering
        \includegraphics[width=\linewidth]{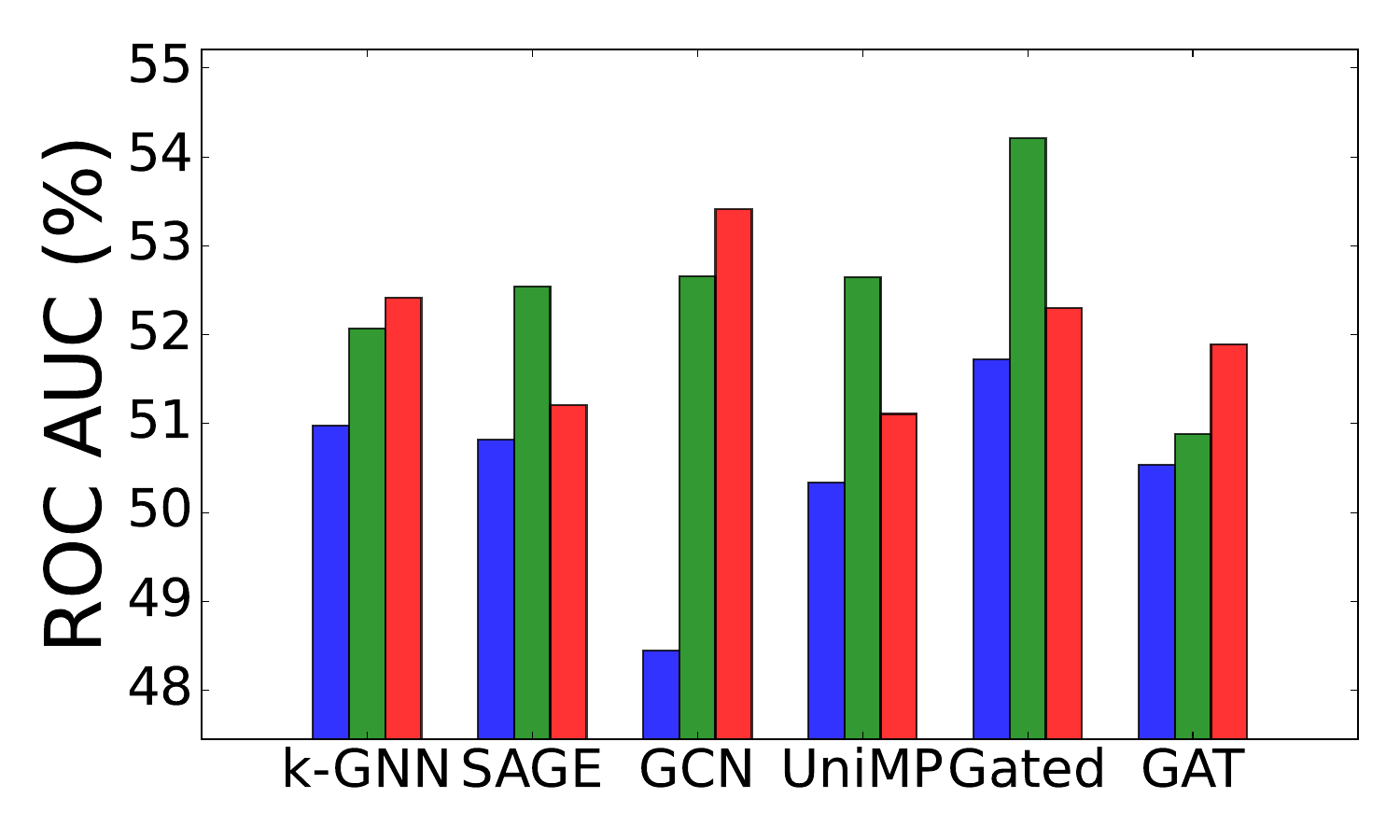}
        \caption{Deezer Ego}
    \end{subfigure}%
    \hspace{0.01\textwidth}
    \begin{subfigure}{0.235\textwidth}
        \centering
        \includegraphics[width=\linewidth]{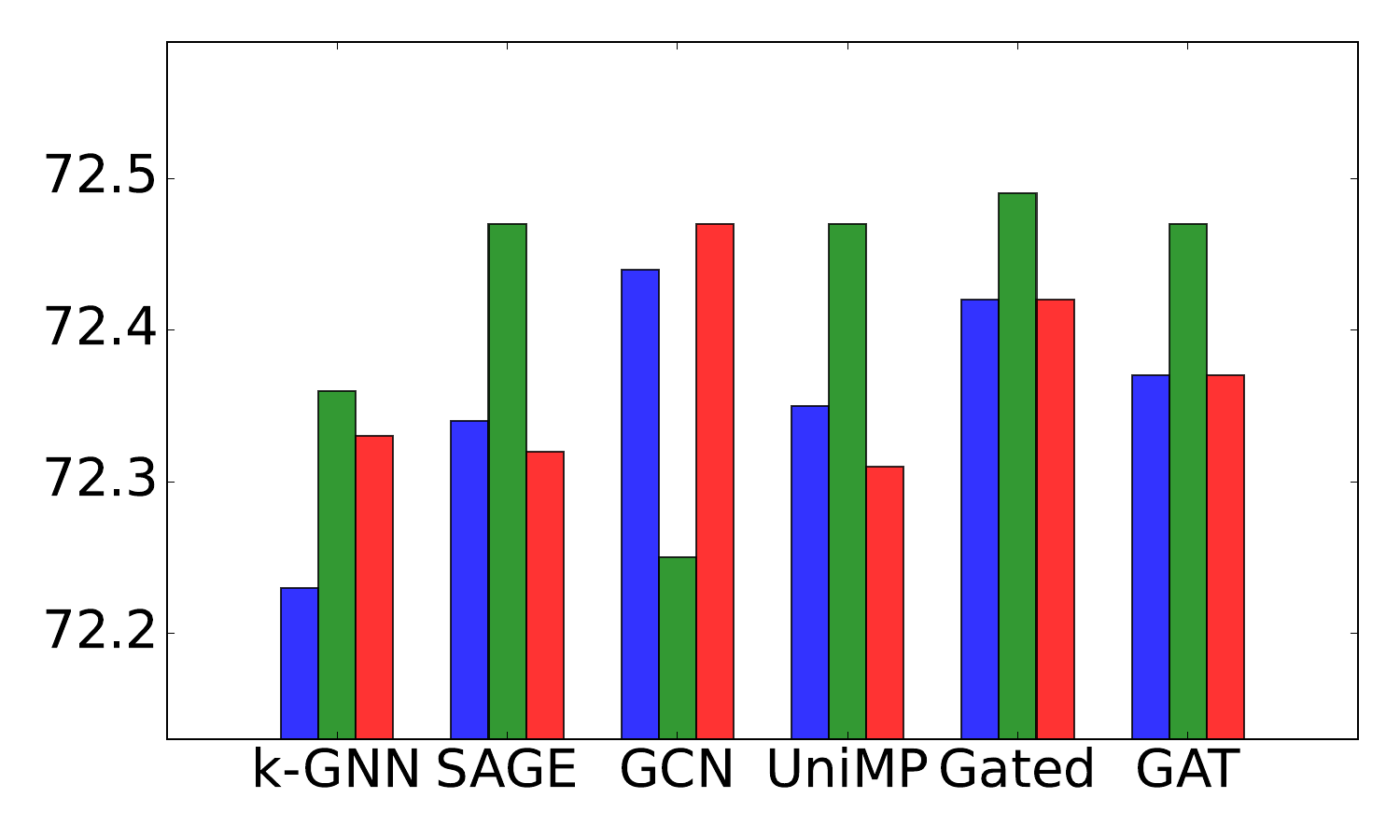}
        \caption{Twitch Ego}
    \end{subfigure}%
    \hspace{0.01\textwidth}
    \begin{subfigure}{0.235\textwidth}
        \centering
        \includegraphics[width=\linewidth]{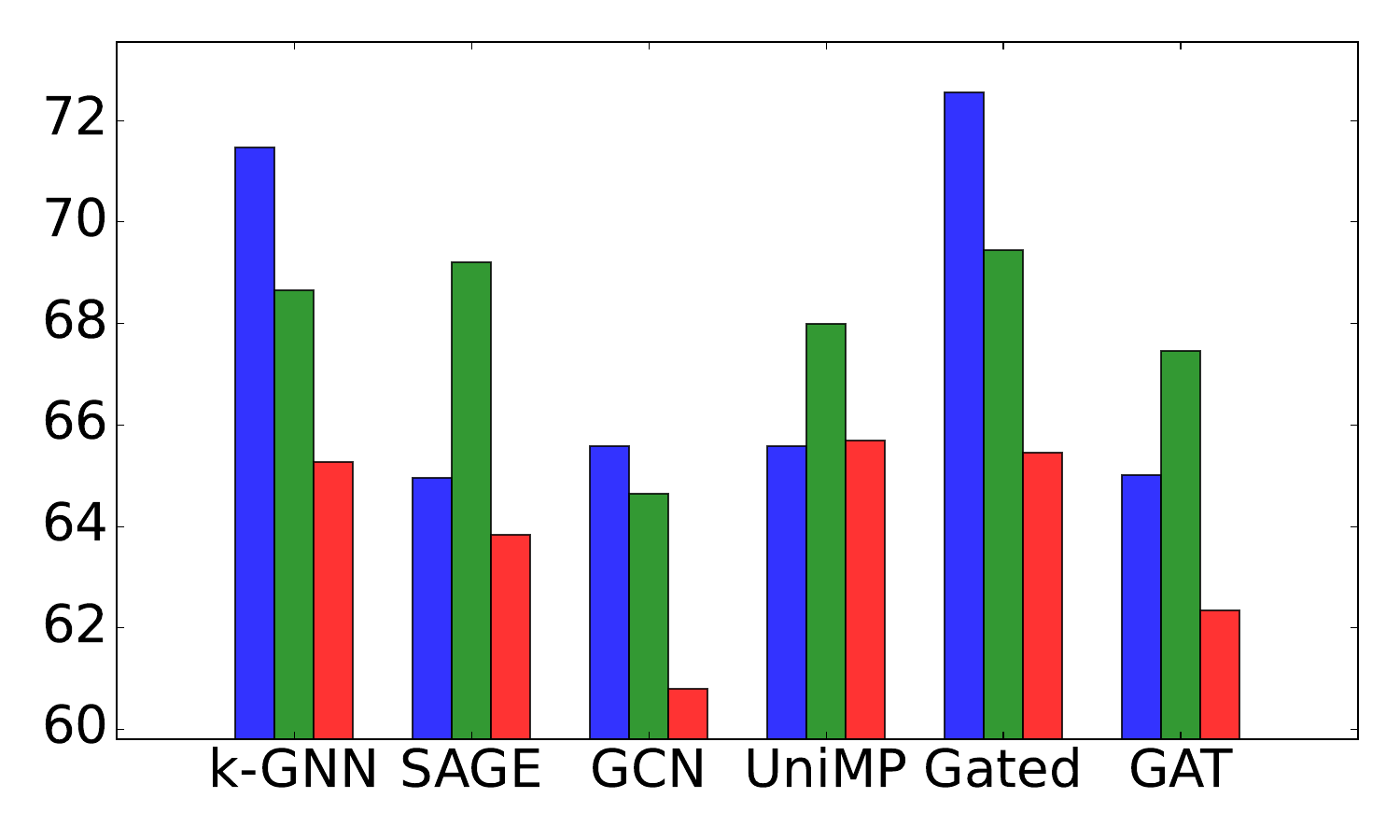}
        \caption{Github Stargazers}
    \end{subfigure}%
    \hspace{0.01\textwidth}
    \begin{subfigure}{0.235\textwidth}
        \centering
        \includegraphics[width=\linewidth]{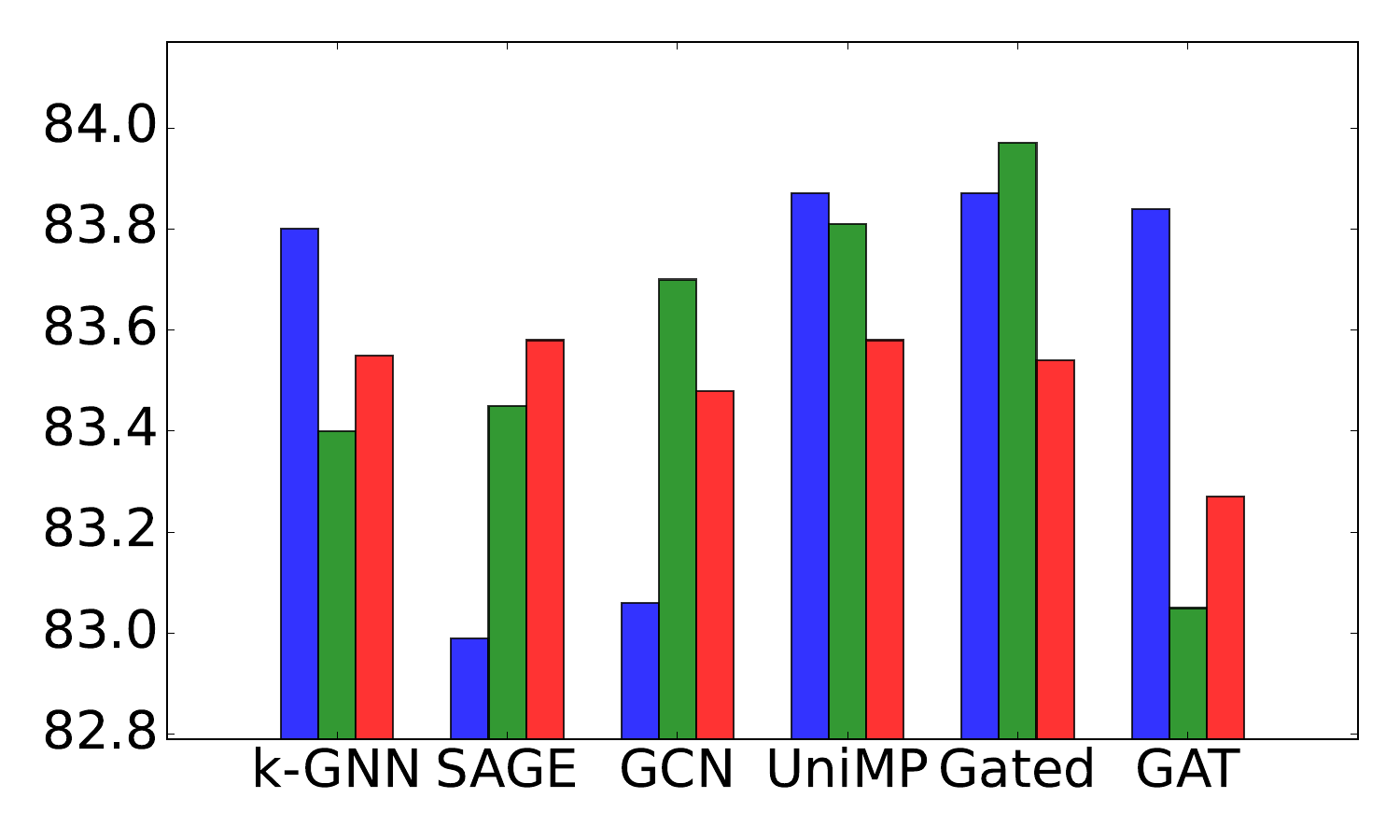}
        \caption{Reddit Threads}
    \end{subfigure}
    
    % Second row of subfigures
    \vspace{0.5cm}
    
    \begin{subfigure}{0.235\textwidth}
        \centering
        \includegraphics[width=\linewidth]{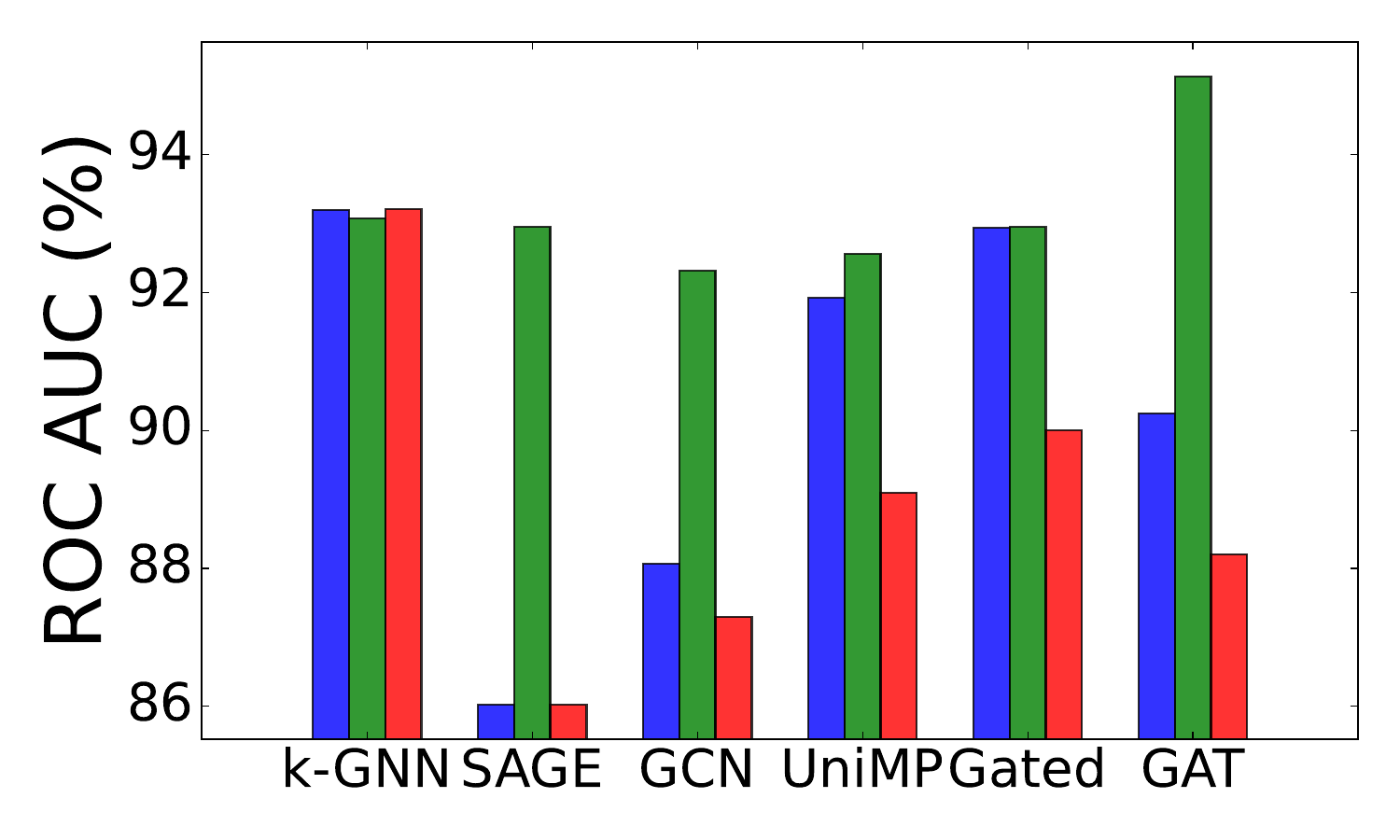}
        \caption{MUTAG}
    \end{subfigure}%
    \hspace{0.01\textwidth}
    \begin{subfigure}{0.235\textwidth}
        \centering
        \includegraphics[width=\linewidth]{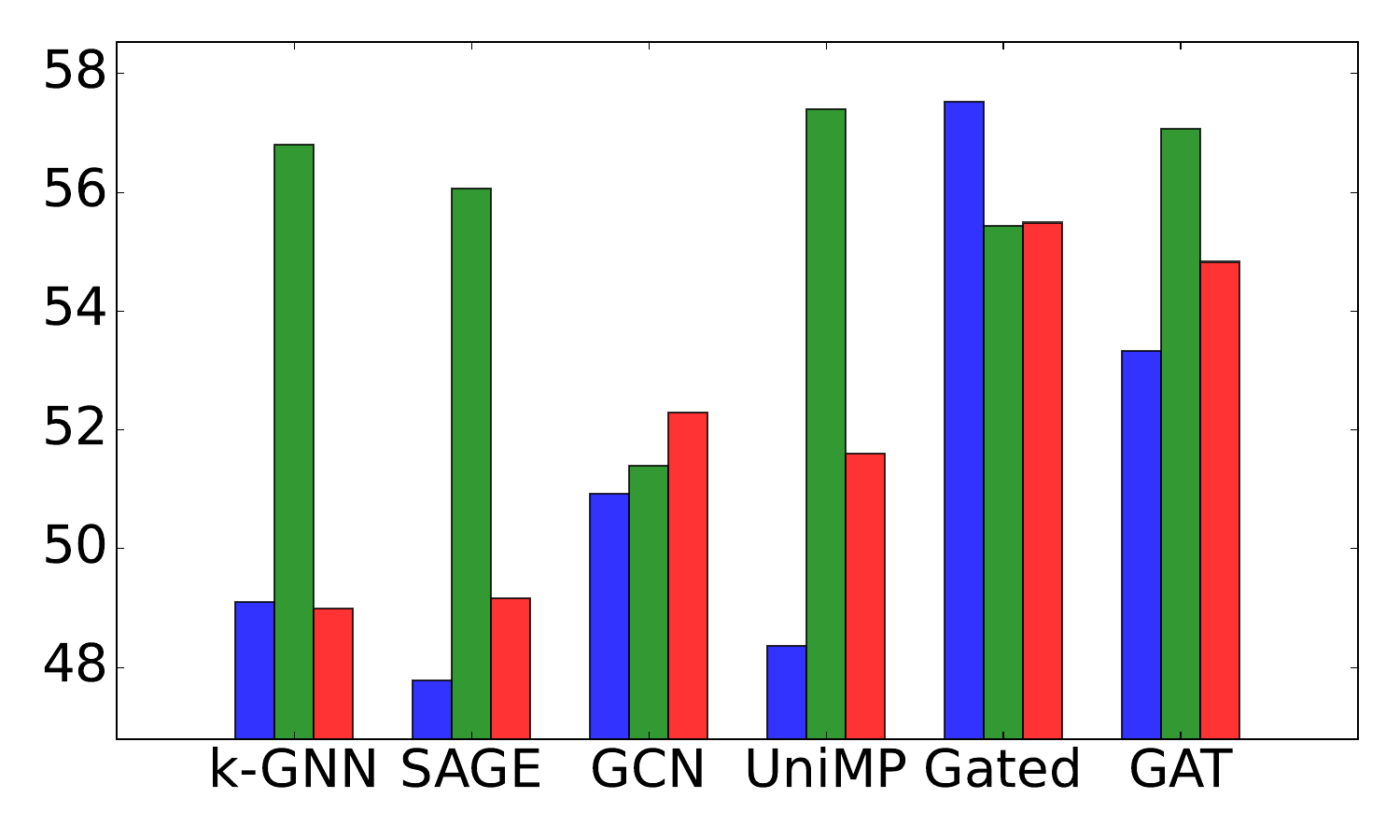}
        \caption{PTC}
    \end{subfigure}%
    \hspace{0.01\textwidth}
    \begin{subfigure}{0.235\textwidth}
        \centering
        \includegraphics[width=\linewidth]{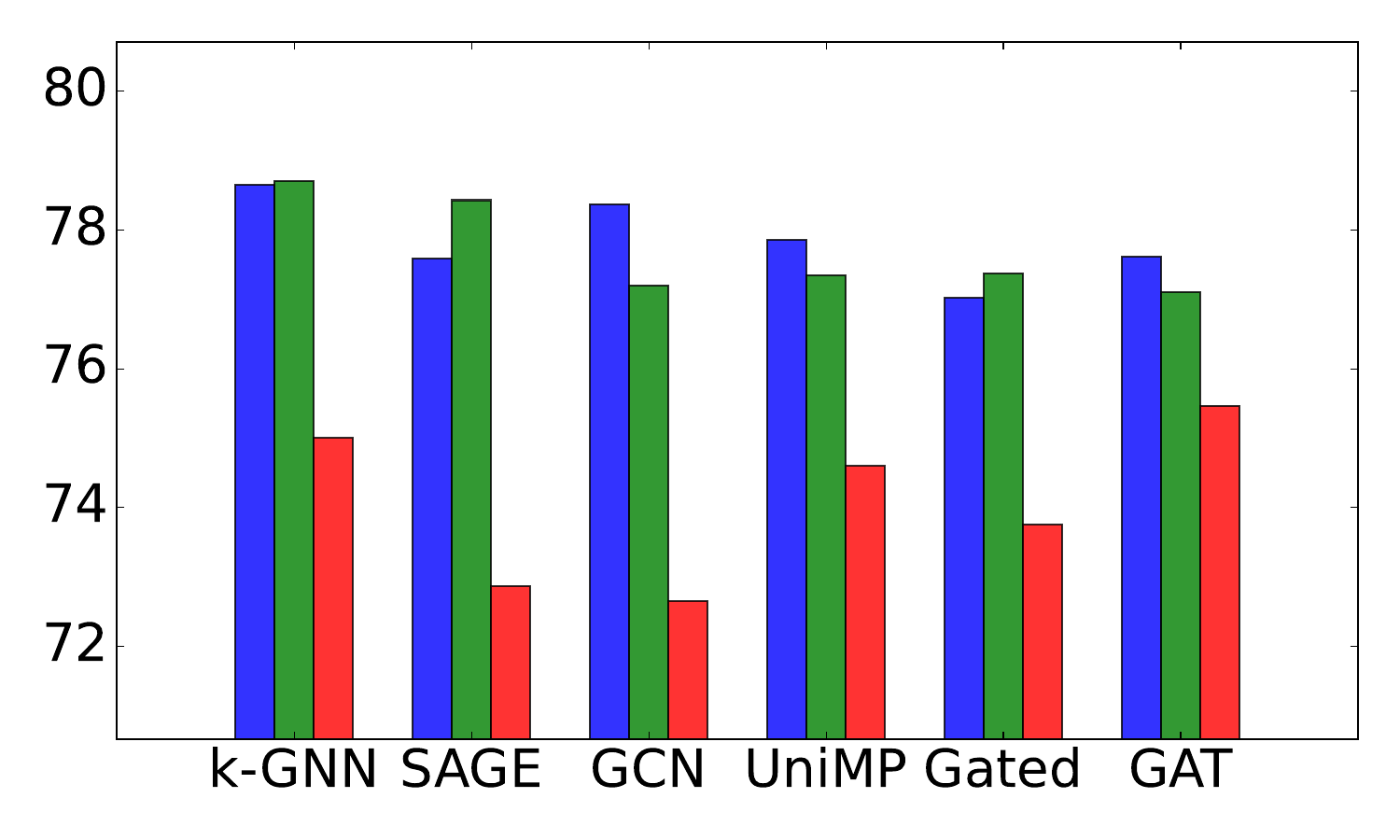}
        \caption{PROTEINS}
    \end{subfigure}%
    \hspace{0.01\textwidth}
    \begin{subfigure}{0.235\textwidth}
        \centering
        \includegraphics[width=\linewidth]{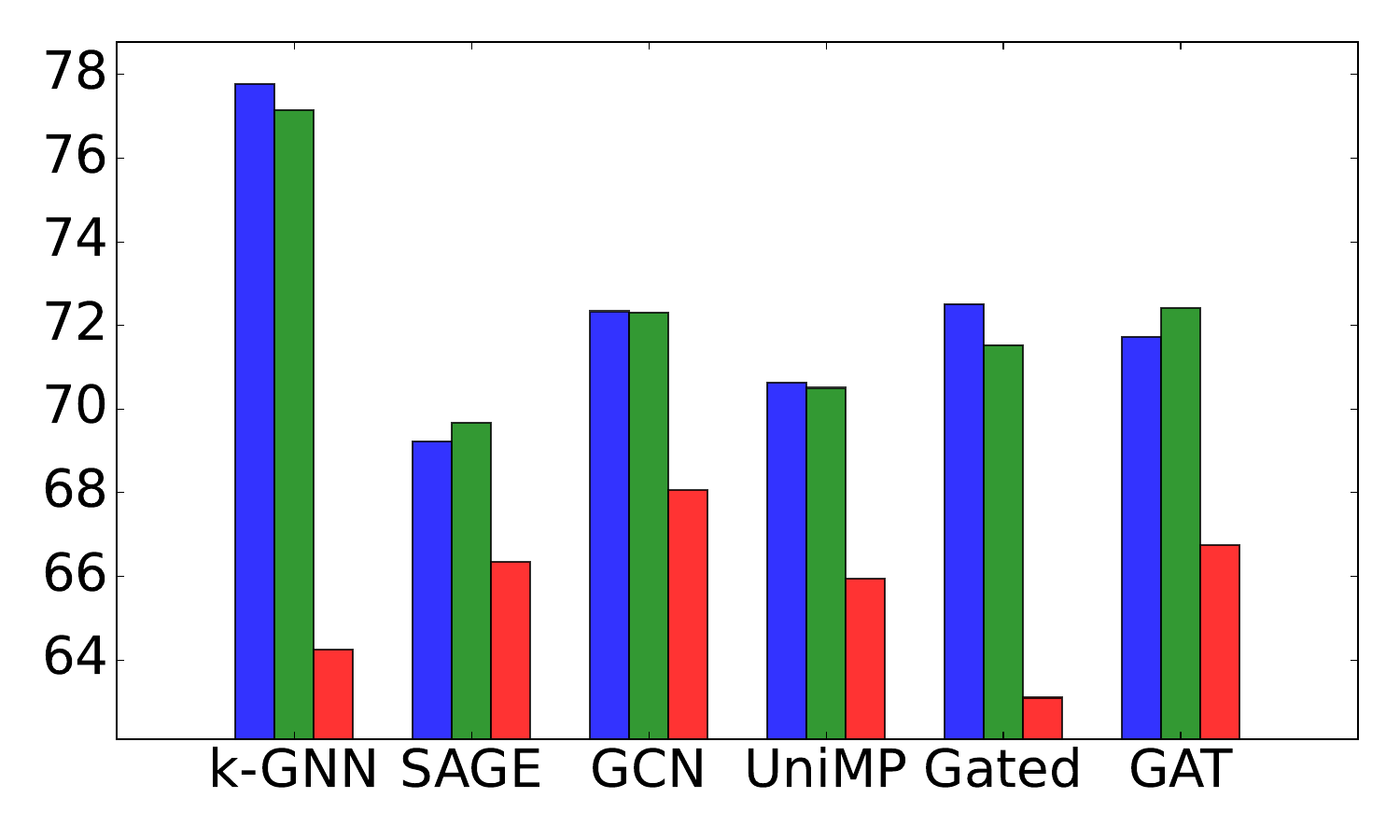}
        \caption{NCI1}
    \end{subfigure}
    % \hspace{0.1cm}
\begin{subfigure}{0.5\textwidth}
\vspace{-1mm}
\centering
  \includegraphics[width = \linewidth]{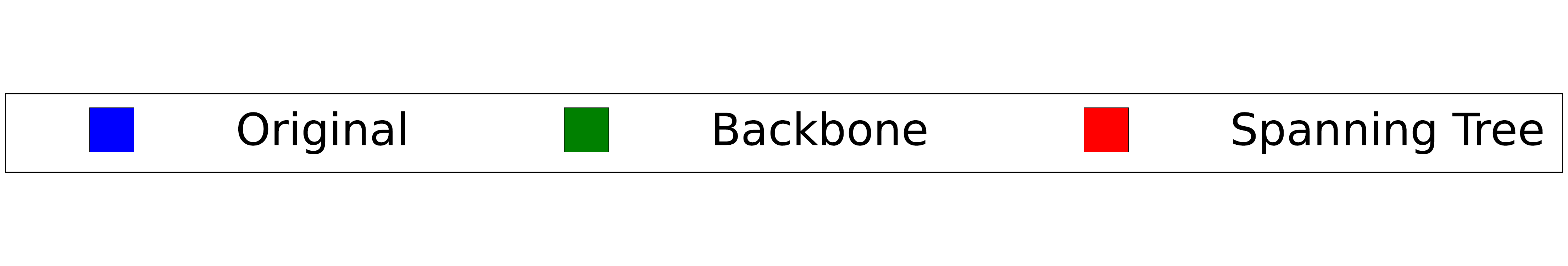}
% \vspace{1.0cm}
\end{subfigure}%
\vspace{-2mm}
\caption{Comparing the Efficacy of Network Backbone Structures for Graph Classification. The backbone represents the best-performing structure between $B_z$ and $B_d$. The results are compared against the original graphs and random spanning tree subgraphs of the original graphs.}
\label{fig:results}
\end{figure*}

\section{Conclusion and Future Work}
\label{sec:conclusion}

This work develops an effective sparse machine learning backbone for graphs using a ZFS-based approach. This method simplifies graph structures into tree-like forms while retaining essential control properties, enhancing learning efficiency. Extensive experiments demonstrate that the ZFS-based backbone not only preserves network controllability but often outperforms original graphs and other sparse representations in graph classification tasks. We also explored a distance-based backbone, showing its potential to generalize the controllability backbone and preserve critical characteristics across diverse networks. The ZFS-based backbone provides a robust, efficient solution for improving graph learning by simplifying structures without sacrificing control attributes. Future research will refine the backbone computation process for large-scale applications and investigate the relationship between controllability and learning through the average degree of learning backbones.

\section{Acknolwdgements}
This work is supported by the National Science Foundation under Grant Numbers 2325416 and 2325417.

%
% ---- Bibliography ----
%
% \begin{thebibliography}{6}
%
\bibliographystyle{spmpsci}
\bibliography{references}

\begin{thebibliography}{10}
\providecommand{\url}[1]{{#1}}
\providecommand{\urlprefix}{URL }
\expandafter\ifx\csname urlstyle\endcsname\relax
  \providecommand{\doi}[1]{DOI~\discretionary{}{}{}#1}\else
  \providecommand{\doi}{DOI~\discretionary{}{}{}\begingroup \urlstyle{rm}\Url}\fi

\bibitem{abbas2023zero}
Abbas, W., Shabbir, M., Yaz{\i}c{\i}o{\u{g}}lu, Y., Koutsoukos, X.: On zero forcing sets and network controllability--computation and edge augmentation.
\newblock IEEE Transactions on Control of Network Systems  (2023)

\bibitem{ahmad2023controllability}
Ahmad, O.U., Shabbir, M., Abbas, W.: Controllability backbone in networks.
\newblock In: 2023 62nd IEEE Conference on Decision and Control (CDC), pp. 2439--2444. IEEE (2023)

\bibitem{ahmad2023graph}
Ahmad, O.U., Shabbir, M., Abbas, W., Koutsoukos, X.: A graph machine learning framework to compute zero forcing sets in graphs.
\newblock IEEE Transactions on Network Science and Engineering  (2023)

\bibitem{work2008zero}
{AIM~Minimum~Rank~Special~Graphs~Work~Group}: Zero forcing sets and the minimum rank of graphs.
\newblock Linear Algebra and its Applications \textbf{428}(7), 1628--1648 (2008)

\bibitem{barabasi2019twenty}
Barabasi, A.: Twenty years of network science: From structure to control.
\newblock In: APS March Meeting Abstracts, vol. 2019, pp. S53--001 (2019)

\bibitem{chan2016optimizing}
Chan, H., Akoglu, L.: Optimizing network robustness by edge rewiring: a general framework.
\newblock Data Mining and Knowledge Discovery \textbf{30}, 1395--1425 (2016)

\bibitem{chen2021unified}
Chen, T., Sui, Y., Chen, X., Zhang, A., Wang, Z.: A unified lottery ticket hypothesis for graph neural networks.
\newblock In: International Conference on Machine Learning, pp. 1695--1706. PMLR (2021)

\bibitem{gasteiger2019diffusion}
Gasteiger, J., Wei{\ss}enberger, S., G{\"u}nnemann, S.: Diffusion improves graph learning.
\newblock Advances in Neural Information Processing Systems \textbf{32} (2019)

\bibitem{godsil2001algebraic}
Godsil, C., Royle, G.F.: Algebraic graph theory, vol. 207.
\newblock Springer Science \& Business Media (2001)

\bibitem{gu2015controllability}
Gu, S., Pasqualetti, F., Cieslak, M., Telesford, Q.K., Yu, A.B., Kahn, A.E., Medaglia, J.D., Vettel, J.M., Miller, M.B., Grafton, S.T., et~al.: Controllability of structural brain networks.
\newblock Nature communications \textbf{6}(1), 8414 (2015)

\bibitem{hamidi2019blind}
Hamidi, M., Chetouani, A., El~Haziti, M., El~Hassouni, M., Cherifi, H.: Blind robust 3d mesh watermarking based on mesh saliency and wavelet transform for copyright protection.
\newblock Information \textbf{10}(2), 67 (2019)

\bibitem{hamilton2017inductive}
Hamilton, W., Ying, Z., Leskovec, J.: Inductive representation learning on large graphs.
\newblock Advances in Neural Information Processing Systems \textbf{30} (2017)

\bibitem{karger1994using}
Karger, D.R.: Using randomized sparsi cation to approximate minimum cuts.
\newblock In: Proc. 5th Symp. on Discrete Algorithms, vol. 424, p. 432 (1994)

\bibitem{kruskal1956shortest}
Kruskal, J.B.: On the shortest spanning subtree of a graph and the traveling salesman problem.
\newblock Proceedings of the American Mathematical Society \textbf{7}(1), 48--50 (1956)

\bibitem{li2010number}
Li, J., Shiu, W.C., Chang, A.: The number of spanning trees of a graph.
\newblock Applied Mathematics Letters \textbf{23}(3), 286--290 (2010)

\bibitem{morris2020tudataset}
Morris, C., Kriege, N.M., Bause, F., Kersting, K., Mutzel, P., Neumann, M.: Tudataset: A collection of benchmark datasets for learning with graphs.
\newblock arXiv preprint arXiv:2007.08663  (2020)

\bibitem{posfai2013effect}
P{\'o}sfai, M., Liu, Y.Y., Slotine, J.J., Barab{\'a}si, A.L.: Effect of correlations on network controllability.
\newblock Scientific reports \textbf{3}(1), 1067 (2013)

\bibitem{rital2005weighted}
Rital, S., Cherifi, H., Miguet, S.: Weighted adaptive neighborhood hypergraph partitioning for image segmentation.
\newblock In: Pattern Recognition and Image Analysis: Third International Conference on Advances in Pattern Recognition, ICAPR 2005, Bath, UK, August 22-25, 2005, Proceedings, Part II 3, pp. 522--531. Springer (2005)

\bibitem{karateclub}
Rozemberczki, B., Kiss, O., Sarkar, R.: {Karate Club: An API Oriented Open-source Python Framework for Unsupervised Learning on Graphs}.
\newblock In: Proceedings of the 29th ACM International Conference on Information and Knowledge Management (CIKM '20), p. 3125–3132. ACM (2020)

\bibitem{said2023network}
Said, A., Ahmad, O.U., Abbas, W., Shabbir, M., Koutsoukos, X.: Network controllability perspectives on graph representation.
\newblock IEEE Transactions on Knowledge and Data Engineering  (2023)

\bibitem{said2024improving}
Said, A., Ahmad, O.U., Abbas, W., Shabbir, M., Koutsoukos, X.: Improving graph machine learning performance through feature augmentation based on network control theory.
\newblock arXiv preprint arXiv:2405.03706  (2024)

\bibitem{sontag2013mathematical}
Sontag, E.D.: Mathematical control theory: deterministic finite dimensional systems, vol.~6.
\newblock Springer Science \& Business Media (2013)

\bibitem{topping2021understanding}
Topping, J., Di~Giovanni, F., Chamberlain, B.P., Dong, X., Bronstein, M.M.: Understanding over-squashing and bottlenecks on graphs via curvature.
\newblock arXiv preprint arXiv:2111.14522  (2021)

\bibitem{tsitsulin2023graph}
Tsitsulin, A., Perozzi, B.: The graph lottery ticket hypothesis: Finding sparse, informative graph structure.
\newblock arXiv preprint arXiv:2312.04762  (2023)

\bibitem{wu2020comprehensive}
Wu, Z., Pan, S., Chen, F., Long, G., Zhang, C., Philip, S.Y.: A comprehensive survey on graph neural networks.
\newblock IEEE Transactions on Neural Networks and Learning Systems \textbf{32}(1), 4--24 (2020)

\bibitem{yassin2023modular}
Yassin, A., Cherifi, H., Seba, H., Togni, O.: A modular network exploration of backbone extraction techniques.
\newblock In: International Conference on Complex Networks and Their Applications, pp. 296--308. Springer (2023)

\bibitem{yaziciouglu2016graph}
Yaz{\i}c{\i}o{\u{g}}lu, A., Abbas, W., Egerstedt, M.: Graph distances and controllability of networks.
\newblock IEEE Transactions on Automatic Control \textbf{61}(12), 4125--4130 (2016)

\bibitem{yaziciouglu2022strong}
Yaz{\i}c{\i}o{\u{g}}lu, Y., Shabbir, M., Abbas, W., Koutsoukos, X.: Strong structural controllability of networks: Comparison of bounds using distances and zero forcing.
\newblock Automatica \textbf{146}, 110,562 (2022)

\bibitem{yu2013connected}
Yu, J., Wang, N., Wang, G., Yu, D.: Connected dominating sets in wireless ad hoc and sensor networks--a comprehensive survey.
\newblock Computer Communications \textbf{36}(2), 121--134 (2013)

\end{thebibliography}

% \end{thebibliography}

% ---- OR ------- Uncomment this section to use bibtex
% \bibliographystyle{spmpsci} % We choose the "plain" reference style
% \bibliography{refs} % Entries are in the refs.bib file
\end{document}